\newcommand{\intg}[4]{\int_{#3}^{#4} \! #1 \; d #2 \,}
\newcommand{\prob}[1]{P\left( #1 \right) }
\newcommand{\ex}[2]{E\!\left\{ #1 \,\middle|\, #2 \right\} }
\newcommand{\exn}[1]{E\!\left\{  #1 \right\} }
\newcommand{\bias}[1]{\mbox{Bias}\!\left( #1 \right)}
\newcommand{\biasc}[2]{\mbox{Bias}\!\left( #1 \,\middle|\, #2 \right)}
\newcommand{\var}[1]{\mbox{Var}\left( #1 \right)}
\newcommand{\reals}{\mathbb{R}}
\newcommand{\Max}{\mathcal{M}}
\newcommand{\eq}[1]{(\ref{#1})}
\newcommand{\est}{\hat{\mu}}
\newcommand{\X}{\mathcal{X}}
\newcommand{\Opt}{\mathcal{O}}
\newcommand{\indicator}{\mathcal{I}}
\newcommand{\estf}{\hat{f}}
\newcommand{\estF}{\hat{F}}
\newcommand{\msa}{\est^{\mbox{\tiny ME}}_*}
\newcommand{\cv}{\est^{\mbox{\tiny CV}}_*}
\newcommand{\lvcv}{\est^{\mbox{\tiny LVCV}}_*}
\newcommand{\lbcv}{\est^{\mbox{\tiny LBCV}}_*}
\newcommand{\argset}{\hat{a}}
\newcommand{\valset}{\hat{v}}
\newtheorem{theorem}{Theorem}
\newtheorem{conjecture}{Conjecture}
\newtheorem{corollary}{Corollary}
\begin{document} 

\title{Estimating the Maximum Expected Value: An Analysis of (Nested) Cross
Validation and the Maximum Sample Average}

\author{Hado van Hasselt \\ H.van.Hasselt@cwi.nl}

\maketitle 
\begin{abstract} 
We investigate the accuracy of the two most common
estimators for the maximum expected value of
a general set of random variables: a generalization
of the maximum sample average, and cross validation.
No unbiased estimator exists and we show that it is
non-trivial to select a good estimator without knowledge
about the distributions of the random variables.
We investigate and bound the bias and variance of the
aforementioned estimators and prove consistency.
The variance of cross validation can be
significantly reduced, but not without risking
a large bias. The bias and variance of
different variants of cross validation are shown to be very
problem-dependent, and a wrong choice can
lead to very inaccurate estimates.
\end{abstract} 

\section{Introduction}\label{sec_intro}
We often need to estimate the \emph{maximum expected value} of a set of random variables (RVs), when only noisy estimates for each of the variables are given.\footnote{Without loss of generality, we assume that we want to \emph{maximize} rather than \emph{minimize}.}
For instance, this problem arises in
optimization in stochastic decision processes and in algorithmic evaluation.

Formally, we consider a finite set $V = \{ V_1, \ldots, V_M \}$ of $M \geq 2$ independent RVs with finite means $\mu_1, \ldots, \mu_M$ and variances $\sigma^2_1, \ldots, \sigma^2_M$. We want to find the value of $\mu_*(V)$, defined by
\begin{equation}\label{maxE}
\mu_*(V) \equiv \max_i \mu_i \equiv \max_i E \{ V_i \} \enspace.
\end{equation}
We assume the distribution of each $V_i$ is unknown, but a set of noisy samples $X$ is given. The question is how best to use the samples to construct an estimate $\est_*(X) \approx \mu_*(V)$. We write $\mu_*$ and $\est_*$ when $V$ and $X$ are clear from the context. It is easy to construct consistent estimators, but we are also interested in the quality for small sample sizes. The mean squared error (MSE) is the most common metric for the quality of an estimator, but sometimes (the sign of) the bias is more important. Unfortunately, as we discuss in Section \ref{sec_bias}, no unbiased estimators for $\mu_*$ exist.

A common estimator is the \emph{maximum estimator} (ME), which constructs estimates $\est_i \approx \mu_i$ and then uses $\est_* \equiv \max_i \est_i$. When $X_i \subset X$ contains direct samples for $V_i$, and $\est_i$ is the average of $X_i$, the ME is simply the maximum sample average.
The ME on average overestimates $\mu_*$. This bias has been rediscovered several times, for instance in economics \cite{Steen:2004} and decision making \cite{Smith:2006}. It can lead to overestimation of the performance of algorithms \cite{Varma:2006,Cawley:2010}, and poor policies in reinforcement learning algorithms \cite{vanHasselt:2011,vanHasselt:2011phd}.
It is related to \emph{over-fitting} in model selection, \emph{selection bias} in sample selection \cite{Heckman:1979} and feature selection \cite{Ambroise:2002}, and the \emph{winner's curse} in auctions \cite{CapenClappCampbell:1971}. 

The most common alternative to avoid this
bias is \emph{cross validation} (CV) \cite{Larson:1931,Mosteller:68}. If CV is used to construct each $\est_i$, and thereafter to estimate $\mu_*$ (as described in Section \ref{sec_CV}), this is called nested CV or ``double cross'' \cite{Stone:74}. Unfortunately, (nested)
CV can lead to a large variance. Perhaps surprisingly,
we show the absolute bias of CV can be larger than
the bias of the ME that we are trying to prevent.
However, the bias of CV is provably negative, which can
be an advantage.

In this paper, we give general distribution-independent bounds for the bias
and variance of the ME and CV. We present a new variant of
CV and show that it is very dependent
on $V$ which CV estimator is most accurate in terms
of MSE. Therefore, it is non-trivial to construct accurate
CV estimators without some knowledge about the
distributions of $V$. We discuss why standard
10-folds CV is often not a bad choice for model selection,
but show that in other settings other estimators may be
much more accurate.

We now discuss two motivating examples to highlight the practical importance of this general topic.

\paragraph{Learning Algorithms}
Many learning algorithms explicitly maximize noisy values to update their internal parameters. For instance, in reinforcement learning \cite{Sutton:98book} the goal is to find strategies that maximize a reward signal in a (sequential) decision task. Inaccurate biased estimators for $\mu_*$ can have adverse effects on the speed of learning and the strategies that are learned
\cite{vanHasselt:2011}.

\paragraph{Evaluation of Algorithms}
Most machine-learning algorithms have tunable parameters. \emph{Internal parameters}, such as the Lagrangian multipliers of a support vector machine (SVM) \cite{Vapnik:95book}, are optimized by the algorithm. \emph{Hyper-parameters}, such as the choice of kernel function in a SVM, are often tuned manually or chosen with domain knowledge.
Other relevant choices by the experimenter---such as which algorithms to consider and the representation of the problem---can be summarized as \emph{meta-parameters}. 

Typically, we evaluate a set $C$ of configurations, where each $c_i \in C$ denotes a specific algorithm with specific hyper- and meta-parameters. Often, each evaluation is noisy, due to (pseudo-)randomness in the algorithms or inherent randomness in the problem. The performance of $c_i$ is then a random variable $V_i$, and we want to construct an estimate $\est_*$ for the best performance $\mu_*$ñ.\footnote{Sometimes we are more interested in the configuration that optimizes the performance than in the resulting performance, but often the performance itself is at least as important. In part, this depends on whether the focus of the research is on the algorithms or on the problem.}

For instance, \citet{Varma:2006} note that the ME results in
overly-optimistic prediction errors and propose to use nested CV. They evaluate an SVM for various hyper-parameters on an artificial problem, with an actual error of 50\%. The estimate by the ME is 41.7\% and nested CV results in $54.2\%$. Varma and Simon argue that the latter exceeds 50\% because nested CV removes a sample from each training set. However, in fact the difference between 50\% and $54.2\%$ is a demonstration of a completely different general bias that we discuss in Section \ref{sec_CV}.
This bias has received very little attention, although---as we will show---it is not in general smaller than the bias caused by using the ME.

\paragraph{Overview}
In the rest of this section, we discuss related work and (notational) preliminaries.
In Section \ref{sec_bias}, we discuss the bias of estimators in general. In Section \ref{sec_estimators}, we discuss the properties of the ME and of CV, including bounds on their bias and variance. In Section \ref{sec_examples} we discuss concrete settings with empirical
illustrations. Section \ref{sec_disc} contains a discussion and Section \ref{sec_conc}
concludes the paper.

\subsection{Related Work}
The \emph{bootstrap} \cite{Efron:1993} is a resampling method that can be used to estimate the bias of an estimator, in order to reduce this bias. Based on this, Tibshirani and Tibshirani \cite{Tibshirani:2009} propose an estimator for $\mu_*$ for model selection in classification. Inevitably---see Section \ref{sec_nonexistence}---the resulting estimate is still biased, and it is typically more variable that the original estimate. Also specifically considering model selection for classifiers, Bernau et al. \cite{Bernau:2011} propose a smoothed version of nested CV. The resulting estimator performs similar to normal (nested) CV, which in turn is shown to typically be more accurate than the approach by Tibshirani and Tibshirani. In this paper we focus on CV and the ME, which are by far the most widely used.

The problem of estimating $\mu_*$ is related to the \emph{multi-armed bandit} framework \cite{robbins1952some,Berry_bandit:85}, where the goal is to find the \emph{identity} of the action with the maximum expected value rather than its \emph{value}. The focus in the literature on multi-armed bandits is often on how best to collect samples. In contrast, in this paper we assume that a set of samples is given. A discussion on how best to collect samples to minimize the bias or MSE is outside the scope of this paper, although we do note that minimizing online regret \cite{Lai:1985,Auer:2002} does not necessarily correspond to minimizing the online MSE of the estimator.


\subsection{Preliminaries}\label{sec_prelim}

The measurable domain of $V_i$ is $\X_i$, and $f_i : \X_i \to \reals$ denotes its probability density function (PDF), such that $\mu_i = \intg{x\;f_i(x)}{x}{\X_i}{}$. For conciseness, we assume $\X_i = \reals$.
We assume these PDFs $f_i$ are unknown and therefore $\mu_* = \max_i \mu_i$ can not be found analytically.

We write $\est_i(X)$ for an estimator for $\mu_i$ based on a sample set $X$. Similarly, $\est_*(X)$ is an estimator for $\mu_*$. We write $\est_i$ and $\est_*$ when $X$ is clear from the context. If $X_i \subset X$ is a set of unbiased samples for $V_i$, $\est_i$ might be the sample average.
In that case, $\est_i$ is unbiased for $\mu_i$. In general, $\est_i$ can be biased for $\mu_i$. As discussed in the next section, no general unbiased estimator for $\mu_*$ exists, even if all $\est_i$ are unbiased.

The following definitions will be useful below, when stating necessary and sufficient conditions for a strictly positive or a strictly negative bias. The set of \emph{optimal} indices for RVs $V$ is defined as
\begin{equation}\label{chEB_eq_optimal_estimator}
\mathcal{O}(V) \equiv \left\{ i \,\left|\, \mu_i = \mu_* \right. \right\} \enspace.
\end{equation}

The set of \emph{maximal} indices for samples $X$ is defined as
\begin{equation}\label{max_indices}
\Max(X) \equiv \left\{ i \,\left|\, \est_i(X) = \max_j \est_j(X) \right. \right\} \enspace.
\end{equation}
An estimator is called optimal or maximal whenever its index is optimal or maximal, respectively. Note that optimal estimators are not necessarily maximal and maximal estimators are not necessarily optimal.

\section{The Bias of an Estimator}\label{sec_bias}
Let $\mathcal{V}$ be a function space containing all admissible sets of $M$ RVs. We might know $\mathcal{V}$, but not the precise identity of $V \in \mathcal{V}$. For instance, $\mathcal{V}$ may be the set of all sets of $M$ normal RVs with finite moments.
Let $p : \mathcal{V} \to \reals$ be a PDF over $\mathcal{V}$. 
The expected MSE of an estimator $\est_*$ is equal to
\begin{equation}\label{MSEV}
\intg{ \! p(V) \intg{ \! P(X|V) \left( \est_*(X) - \mu_* \right)^2 }{X\!}{X}{} }{ V\! }{ \mathcal{V} }{} ,
\end{equation}
In any given concrete setting, there is a single unknown set $V$. Therefore, $p$ does not exist `in the world'. Rather, $p$ might model our prior belief about which sets $V$ are likely in a given setting, or it might specify the $V$ for which we would like an estimator to perform well.
The MSE consists of variance and bias. 
To reason in some generality about which estimators are good in practice, we discuss the \emph{non-existence of unbiased estimators} and the \emph{direction of the bias}.

\paragraph{Non-Existence of Unbiased Estimators}\label{sec_nonexistence}
By definition, $\est_*$ is a general unbiased estimator (GUE) for $\mathcal{V}$ if and only if
\begin{equation}\label{GUE}
\forall V \in \mathcal{V} :  \ex{\est_* }{ V } = \mu_* \enspace.
\end{equation}
Unfortunately, for most $\mathcal{V}$ of interest no such estimator exists.
For instance, \citet{Blumenthal:1968} show no GUE exists for two normal distributions  and \citet{Dhariyal:1985} proved this for arbitrary $M\geq 2$ and for more general distributions, including the exponential family. Essentially, the argument is that a reasonable estimator for $\mu_*$ depends smoothly on the values of the samples, whereas the real value $\mu_*$ is a piece-wise linear function with a discontinuous derivative. We can not know the location of these discontinuities without knowing the actual maximum.

Note that \eq{GUE} is already false if $\mathcal{V}$ contains only a single set of variables for which $\est_*$ is biased. 
However, bias alone does not tell us everything, and a low bias does not necessarily imply a small expected MSE.

\paragraph{The Direction of the Bias}
In some cases, the direction of the bias is very important.
Suppose we test an algorithm for various hyper-parameters and observe that the best performance is better than some baseline. If we simply use the highest test result, it can not be concluded that the algorithm can really structurally outperform the baseline for any of the specific hyper-parameters. Although this may sound trivial, it is common in practice: when we manually tune hyper- or meta-parameters on a problem and use the best result, we are using $\max_i \est_i$, which has non-negative bias. It is hard to avoid optimizing on meta-parameters: these include the very (properties of the) problem we test the algorithm on.

The practical implication of this positive bias is that the algorithm will disappoint in future evaluations on similar (real-world) problems.
In contrast, if we use an estimator with \emph{non-positive bias} and our estimate is higher than the baseline, we can have much more confidence that the algorithm can reach that performance consistently with a properly tuned hyper-parameter.
This is similar to the considerations about overfitting in model selection, where CV is most often used. We prove below that CV indeed has non-positive bias, and can therefore avoid overestimations of $\mu_*$.

As another example, the performance of most machine-learning algorithms improves when more data is available. When the data collection is expensive it is useful to predict how an algorithm performs when more data is available, before actually collecting this data. An overestimation of the future performance can lead to a misallocation of resources, since the collected data may be less useful than predicted. An underestimation means we may be too pessimistic, and too often decide not to collect more data. Whether or not the false positives are more important than the false negatives depends crucially on specifics of the setting.

\section{Estimators for the Maximum Expected Value}\label{sec_estimators}
In this section, we discuss the ME and CV estimators for $\mu_*$ in detail. We bound the biases and variances of all estimators, discuss similarities and contrasts, and prove consistency. We introduce a low-variance variant of CV. We give necessary and sufficient conditions for non-zero biases to occur for all estimators, and perhaps surprisingly we show that there are settings in which the negative bias of all variants of CV is larger in size than the positive bias of ME. All proofs are given in an appendix and the end of this paper.

\subsection{The Maximum-Estimator Estimator}\label{sec_ME}
The maximum-estimator (ME) estimator for $\mu_*$ is
\begin{equation}\label{ME_est}
\msa(X) \equiv \max_i \est_i(X) \enspace,
\end{equation}
where $\est_i$ is a (possibly biased) estimator for $\mu_i$.
Because it is conceptually simple and easy to implement, the ME estimator is often used in practice.
The theorem below proves its bias is non-negative and gives necessary and sufficient conditions for a strictly positive bias. The theorem is stronger and more general than some similar earlier theorems. For instance, \citet{Smith:2006} do not consider the possibility of multiple optimal variables, and do not discuss necessity of the conditions for a strictly positive bias.

\begin{theorem}\label{theorem_overest}
For any given set $V$, $M\geq 1$ and unbiased estimators $\est_i$, $\ex{\est_i}{V} = \mu_i$,
\[ 
\ex{ \msa }{ V }
\geq \mu_* \enspace,
\] 
with equality if and only if all optimal indices are maximal with probability one.
\end{theorem}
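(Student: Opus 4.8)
The plan is to derive the inequality from a trivial pointwise bound and then read off the equality characterization from the elementary fact that a pointwise non-negative random variable has zero (conditional) expectation exactly when it vanishes with probability one. The key observation is that for every fixed index $j$ we have the pointwise inequality $\msa(X) = \max_i \est_i(X) \geq \est_j(X)$, so monotonicity of the expectation immediately gives $\ex{\msa}{V} \geq \ex{\est_j}{V} = \mu_j$ using unbiasedness.

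First I would fix an optimal index $o \in \Opt(V)$, which exists because $\Opt(V)$ is nonempty for any finite $M \geq 1$. Applying the bound above with $j = o$ and using $\mu_o = \mu_*$ yields $\ex{\msa}{V} \geq \mu_o = \mu_*$, which establishes the inequality. Integrability is not an issue here: each $\est_i$ is integrable since $\ex{\est_i}{V} = \mu_i$ is finite, and $|\max_i \est_i| \leq \sum_i |\est_i|$, so $\msa$ is integrable and every expectation in the argument is well defined.

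For the equality condition, the main step is to rewrite the gap as the expectation of a pointwise non-negative quantity. For any optimal $o$, linearity gives $\ex{\msa}{V} - \mu_* = \ex{\msa - \est_o}{V}$, and the integrand $\msa - \est_o = \max_i \est_i - \est_o$ is non-negative for every $X$. Hence the gap is zero if and only if $\msa - \est_o = 0$ with probability one, i.e.\ if and only if $\est_o(X) = \max_j \est_j(X)$ with probability one, which is precisely the statement that index $o$ is maximal with probability one. Since the identity $\ex{\msa}{V} - \mu_* = \ex{\msa - \est_o}{V}$ holds simultaneously for every optimal $o$, equality of $\ex{\msa}{V}$ and $\mu_*$ forces \emph{every} optimal index to be maximal with probability one; conversely, if all optimal indices are maximal with probability one then picking any one of them makes the gap vanish. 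This yields the claimed ``if and only if''.

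The argument is short, and the only place demanding care is the bookkeeping in the equality direction: the forward implication must conclude that all optimal indices (not merely one) are maximal almost surely, which I would obtain by noting that the displayed identity holds for each optimal $o$ separately, whereas the converse needs only a single optimal index. I would also flag the standard measure-theoretic fact---a non-negative random variable with zero mean is zero almost surely---as the technical crux that converts the one-line inequality into a sharp characterization of when it is tight.
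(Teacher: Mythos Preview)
Your proof is correct and follows essentially the same idea as the paper's: both exploit the pointwise inequality $\msa \geq \est_j$ for an optimal $j$ and read off the equality case from when this inequality is tight almost surely. The only cosmetic difference is that the paper splits the expectation by conditioning on the event $A_j = \{j \in \Max\}$, whereas you work directly with the nonnegative random variable $\msa - \est_o$; your presentation is slightly cleaner and also makes integrability explicit.
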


Theorem \ref{theorem_overest} implies a lower bound of zero for the bias of the ME.
An upper bound for arbitrary means and variances is given by \citet{Aven:1985}:
\begin{equation}\label{ME_bound}
\bias{\msa} \leq \sqrt{\frac{M-1}{M} \sum_{i}^M \var{ \est_i }} \enspace,
\end{equation}
which is tight when the estimators are iid \citep{Arnold:1979}, indicating that iid variables are a worst-case setting.

We do not know of previous work that bounds the variance, which we discuss next.
\begin{theorem}\label{ME_variance_bound}
The variance of the ME estimator is bounded by
$
\var{ \msa } \leq \sum_{i=1}^M \var{ \est_i }
$.
\end{theorem}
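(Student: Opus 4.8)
The plan is to bound the variance of $\msa = \max_i \est_i$ not by reference to its exact mean — which is intractable — but by the mean squared deviation from a conveniently chosen constant. Writing $m_i \isdef \exn{\est_i}$ for the (possibly biased) mean of each component estimator, I would first invoke the elementary identity $\exn{(Z-c)^2} = \var{Z} + (\exn{Z}-c)^2$, valid for any constant $c$ and any random variable $Z$. Taking $Z = \msa$ and $c = \max_i m_i$ gives the one-line reduction $\var{\msa} \leq \exn{\left( \max_i \est_i - \max_i m_i \right)^2}$, trading the unknown $\exn{\max_i \est_i}$ for the explicit constant $\max_i m_i$ at the cost of an inequality.

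The second ingredient is a pointwise Lipschitz-type bound for the maximum: for any two real vectors $(a_i)$ and $(b_i)$, one has $\left| \max_i a_i - \max_i b_i \right| \leq \max_i |a_i - b_i|$. I would prove this in a line by letting $k$ attain $\max_i a_i$, so that $\max_i a_i - \max_j b_j \leq a_k - b_k \leq \max_i |a_i - b_i|$, and arguing symmetrically for the reverse direction. Applied pathwise with $a_i = \est_i$ and $b_i = m_i$, this yields $\left| \max_i \est_i - \max_i m_i \right| \leq \max_i |\est_i - m_i|$.

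Finally I would pass from the supremum norm to the Euclidean norm, since for any index $k$ we have $|\est_k - m_k|^2 \leq \sum_{i=1}^M (\est_i - m_i)^2$, and hence $\left( \max_i |\est_i - m_i| \right)^2 \leq \sum_{i=1}^M (\est_i - m_i)^2$. Chaining the three bounds and taking expectations, linearity of expectation gives $\var{\msa} \leq \sum_{i=1}^M \exn{(\est_i - m_i)^2} = \sum_{i=1}^M \var{\est_i}$, which is the claim.

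The only genuinely nontrivial step is the first one: the naive definition of variance involves $\exn{\max_i \est_i}$, which in general has no closed form, so the entire argument hinges on recognizing that the variance is dominated by the second moment about \emph{any} reference point, and then choosing that point to be exactly $\max_i m_i$ so that the max-inequality becomes applicable. Everything afterwards is elementary and distribution-free, which is precisely why the bound requires neither unbiasedness of the individual $\est_i$ nor any knowledge of the underlying distributions.
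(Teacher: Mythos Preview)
Your argument is correct, and it is genuinely different from the paper's. The paper invokes the Efron--Stein inequality: with $A$ and $B$ two independent copies of $\est = (\est_1,\ldots,\est_M)$ and $C^{(i)}$ obtained from $A$ by replacing $A_i$ with $B_i$, one has $\var{\max_j A_j} \leq \tfrac{1}{2}\sum_i \exn{(\max_j A_j - \max_j C^{(i)}_j)^2}$, and since $A$ and $C^{(i)}$ differ only in coordinate $i$ the same Lipschitz step you use gives $(\max_j A_j - \max_j C^{(i)}_j)^2 \leq (A_i - B_i)^2$, whose expectation is $2\var{\est_i}$.

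Your route --- centering at $\max_i m_i$, then Lipschitz for the max, then $\ell_\infty \leq \ell_2$ --- is more elementary: it avoids Efron--Stein entirely and needs no auxiliary independent copy. It is also strictly more general, because nothing in your chain uses independence of the $\est_i$, whereas Efron--Stein is stated for independent coordinates. Both proofs ultimately exploit the $1$-Lipschitz property of the coordinatewise maximum, but yours converts it into a second-moment bound about a fixed point, while the paper converts it into a bounded-differences statement. The price you pay is nothing; the bound and the constants are identical.
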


Theorem \ref{ME_variance_bound} and bound \eq{ME_bound} imply that $\msa$ is consistent for $\mu_*$ whenever each $\est_i$ is consistent for $\mu_i$ and that
$
\mbox{MSE}(\msa) < 2 \sum_{i=1}^M \var{ \est_i }
$.

\subsection{The Cross-Validation Estimator}\label{sec_CV}
In general, $\mu_*$ can be considered to be a weighted average of the means of all optimal variables:
$
\mu_* = \frac{1}{|\Opt(V)|}\sum_{i=1}^M \indicator(i \in \Opt(V)) \mu_i 
$, 
where $\indicator$ is the indicator function.
We do not know $\Opt(V)$ and $\mu_i$, but with sample sets $A,B$ we can approximate these with $\Max(A)$ and $\est_i(B)$ to obtain
\begin{equation}\label{maximal_weights}
\mu_* \approx \est_* \equiv \frac{1}{|\Max(A)|}\sum_{i=1}^M \indicator(i \in \Max(A)) \est_i(B) \enspace.
\end{equation}
If $A = B = X$, this reduces to $\msa$. However, suppose that $A$ and $B$ are independent.
This idea leads to \emph{cross-validation} (CV) estimators.
Of course, CV itself is not new. 
However, it seems to be
less well-known how properties of the problem affect
the accuracy and that CV can be quite biased.

We split each $X$ into $K$ disjoint sets $X^k$ and define $\est^k_i \equiv \est_i(X^k)$. For instance, $\est^k_i$ might be the sample average of $X^k_i$.
We consider two different CV estimators. In both methods, for each $k \in \{ 1, \ldots, K \}$ we construct an \emph{argument set} $\argset^k$ and a \emph{value set} $\valset^k$.

\emph{Low-bias cross validation} (LBCV) is the `standard' CV estimator, where $K-1$ sets are used to build the argument set $\argset^k$ (the model), and the remaining set is used to determine its value:
\begin{align*}
\argset^k_i &\equiv \est_i(X \setminus X^k) \quad\mbox{and}
&\valset^k_i &\equiv \est_i(X^k) \equiv \est^k_i \enspace. 
\end{align*}
\emph{Low-variance cross validation} (LVCV) reverses the definitions for $\argset^k$ and $\valset^k$:
\begin{align*}
\argset^k_i &\equiv \est(X_i^k) \equiv \est^k_i \quad\mbox{and}
&\valset^k_i &\equiv \est(X_i \setminus X_i^k) \enspace. 
\end{align*}
We do not know of any previous work that discusses this variant. However, its lower variance can sometimes result in much lower MSEs than obtained by LBCV. For both LBCV and LVCV, if $\est_i(X)$ is the sample average of $X_i$ and all samples are unbiased, then $E \{ \argset^k_i \} = E \{ \valset^k_i \} = \mu_i$.

For either approach, $\Max^k$ is the set of indices that maximize the argument vector.
For LBCV this implies $\Max^k = \Max(X \setminus X^k)$ and for LVCV this implies $\Max^k = \Max(X^k)$.
We find the value of these indices with the value vector, resulting in
\begin{equation}\label{CV_estk}
\est^k_* \equiv \frac{1}{|\Max^k|} \sum_{i \in \Max^k} \valset^k_i
\enspace.\end{equation}
We then average over all $K$ sets:
\begin{equation}\label{CV_est}
\cv \equiv \frac{1}{K} \sum_{k=1}^K \est^k_* =  \frac{1}{K} \sum_{k=1}^K \frac{1}{|\Max^k|} \sum_{i\in \Max^k}^M \valset^k_i \enspace,
\end{equation}
where either $\cv = \lbcv$ or $\cv = \lvcv$, depending on the definitions of $\argset^k_i$ and $\valset^k_i$.

The construction of $\est^k_*$ performs the approximation:
$
\valset^k_i \approx \valset^k_{i_*} \approx \mu_{i_*} \equiv \mu_*
$,
where $i \in \Max^k$ and $i^* \in \Opt$.
The first approximation results from using $\Max^k$ to approximate $\Opt$ and is the main source of bias. The second approximation results from the variance of $\valset^k$.

For large enough $X$, $K$ can be treated as a parameter that trades off bias and variance. For LBCV larger $K$ implies less bias and more variance, while for LVCV it implies more bias and less variance. For $K=2$, LBCV and LVCV are equivalent. If $K>2$, LVCV is more  biased but less variable than LBCV, since $\Max^k$ is then based on fewer samples while $\valset^k_i$ is based on more samples. When $\forall i : |X_i| = K$ for LBCV, $\valset^k_i$ is based on a single sample, resulting in a large variance. This variant is commonly known as \emph{leave-one-out} CV. When $\forall i : |X_i| = K$ for LVCV, $\argset^k_i$ is based on a single sample, potentially resulting in large bias due to large probabilities of selecting sub-optimal indices. 

The bias of CV for $\mu_*$ has received comparatively little attention. Sometimes  the bias is mentioned without explanation \citep{Kohavi:1995}, and sometimes it is even claimed that CV is unbiased \citep{Mannor:2007}. Often, any observed bias is attributed to the fact that $\argset^k$ can be biased when it based on $\frac{K-1}{K}|X|$ rather than $|X|$ samples \citep{Varma:2006}. This can be a factor, but the bias induced by using $\Max^k$ for $\Opt$ is often at least as important, as will be demonstrated below.  Some confusion seems to arise from the fact that $\valset^k_i$ is often unbiased for $\mu_i$. Unfortunately, this does not imply that $\cv$ is unbiased for $\mu_*$.

Next, we prove that CV estimators can have a negative bias even if $\argset$ and $\valset$ are unbiased, and we give necessary and sufficient conditions for a strictly negative bias.
\begin{theorem}\label{CV_underest_theorem}
If $\ex{\est^k_i}{V} = \mu_i$ is unbiased then
$
\ex{\cv}{V}
\leq \mu_*
$ 
is negatively biased, 
with a strict inequality if and only if there is a non-zero probability that any non-optimal index is maximal.
\end{theorem}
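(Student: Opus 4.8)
The plan is to exploit the fact that, for both LBCV and LVCV, the argument set $\argset^k$ and the value set $\valset^k$ are built from disjoint subsets of the samples and are therefore independent. This independence is the crucial structural ingredient, so I would first make it explicit (pointing to the disjointness of the sample sets underlying $\argset^k$ and $\valset^k$ in either variant) and then proceed by conditioning on $\argset^k$.

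First I would fix $k$ and condition on $\argset^k$. Since $\Max^k$ is a deterministic function of $\argset^k$, conditioning fixes the set $\Max^k$ and its cardinality $|\Max^k|$. By independence, conditioning on $\argset^k$ does not change the expectation of $\valset^k_i$, so using unbiasedness $\ex{\valset^k_i}{V} = \mu_i$ gives
\[
\ex{\est^k_* }{ \argset^k, V} = \frac{1}{|\Max^k|}\sum_{i\in\Max^k}\mu_i \enspace.
\]
This is a uniform average of the means $\mu_i$ over the maximal indices, and since each $\mu_i \leq \mu_*$ the average is at most $\mu_*$. Taking the expectation over $\argset^k$ and then averaging over $k$ (using the definition of $\cv$) yields $\ex{\cv}{V}\leq\mu_*$, proving the non-positive bias.

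For the characterization of strict inequality, I would observe that the conditional average above equals $\mu_*$ exactly when every index in $\Max^k$ is optimal, and is strictly smaller precisely when $\Max^k$ contains at least one non-optimal index (an index with $\mu_i < \mu_*$). Hence $\ex{\est^k_*}{V} < \mu_*$ if and only if the event that some non-optimal index is maximal has positive probability, and the same condition characterizes $\ex{\cv}{V} < \mu_*$ after averaging over folds.

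The main obstacle is handling the conditioning carefully: $\Max^k$, its cardinality, and the membership weights $\indicator(i\in\Max^k)/|\Max^k|$ are all random and correlated with $\argset^k$, so the argument only goes through because conditioning on $\argset^k$ simultaneously freezes these weights while leaving the value-set expectations untouched. The key is therefore to argue rigorously that the independence of $\valset^k$ from $\argset^k$ survives this conditioning, and that it holds for both CV variants; once that is secured, the remaining steps are an elementary bound on an average and a transfer of a pointwise inequality through a positive-probability event.
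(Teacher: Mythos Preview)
Your proposal is correct and follows essentially the same approach as the paper: both arguments exploit the independence of $\Max^k$ (a function of $\argset^k$) from $\valset^k$ to write $\exn{\est^k_*}$ as a convex combination $\sum_i w^k_i \mu_i \leq \mu_*$, and both characterize strict inequality by whether a non-optimal index can be maximal with positive probability. The only cosmetic difference is that you phrase the factorization via conditioning on $\argset^k$ and the tower property, whereas the paper factors the expectation directly using independence to define the weights $w^k_i = \exn{\indicator(i\in\Max^k)/|\Max^k|}$.
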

The theorem shows that $\lvcv$ and $\lbcv$ on average underestimate $\mu_*$ if and only if there is a non-zero probability that $i \in \Max^k(X)$ for some $i \notin \mathcal{O}(V)$. A prominent case in which this does \emph{not} hold is when all variables have the same mean, since then $i \in \Opt(V)$ for all $i$. Interestingly, this implies that CV is unbiased when the $V_i \in V$ are iid, which is a worst case for the ME.
Theorem \ref{CV_underest_theorem} implies that the bias of CV is bound from above by zero. We conjecture the bias is bound from below as follows.

\begin{conjecture}\label{CV_bound}
Let $\ex{\est^k_i}{V} = \mu_i$.
Then
\[
\bias{\cv} > - \frac{1}{K} \sum_{k=1}^K \sqrt{ \sum_{i=1}^M \var{\argset^k_i} } 
\enspace.\]
\end{conjecture}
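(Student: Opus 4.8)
The plan is to prove the bound fold-by-fold and then average. Since $\cv = \frac{1}{K}\sum_{k=1}^K \est^k_*$, linearity of expectation gives $\bias{\cv} = \frac{1}{K}\sum_{k=1}^K\left(\ex{\est^k_*}{V} - \mu_*\right)$, so it suffices to show $\ex{\est^k_*}{V} - \mu_* > -\sqrt{\sum_i \var{\argset^k_i}}$ for each $k$. The first step I would take is to exploit that the argument set $\argset^k$ and the value set $\valset^k$ are built from disjoint samples and are therefore independent. Conditioning on $\argset^k$ (which alone determines $\Max^k$) and using $\ex{\valset^k_i}{V} = \mu_i$, the value set is averaged out: $\ex{\est^k_*}{V} = \exn{\frac{1}{|\Max^k|}\sum_{i\in\Max^k}\mu_i} = \exn{\bar\mu_{\Max^k}}$, where $\bar\mu_{\Max^k}$ denotes the average true mean over the maximal indices. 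This is the crucial reduction: it removes the value set entirely and turns the per-fold bias into $\exn{\bar\mu_{\Max^k} - \mu_*}$, a quantity that depends only on the argument set.

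Next I would recast this as a statement about the centered argument estimators $Y^k_i \equiv \argset^k_i - \mu_i$, which satisfy $\ex{Y^k_i}{V} = 0$ and $\var{Y^k_i} = \var{\argset^k_i}$. For every $i \in \Max^k$ the common maximal value $\argset^k_i = \max_j \argset^k_j$ holds, so $\mu_i = \max_j \argset^k_j - Y^k_i$, and averaging over $\Max^k$ gives $\bar\mu_{\Max^k} = \max_j \argset^k_j - \bar Y^k_{\Max^k}$, where $\bar Y^k_{\Max^k}$ is the average of $Y^k_i$ over $i \in \Max^k$. Writing $\opti$ for any optimal index, the bound $\max_j \argset^k_j \geq \argset^k_{\opti}$ yields $\exn{\max_j \argset^k_j} \geq \mu_*$, so that
\[
\exn{\mu_* - \bar\mu_{\Max^k}} = \exn{\bar Y^k_{\Max^k}} - \left(\exn{\max_j \argset^k_j} - \mu_*\right) \leq \exn{\bar Y^k_{\Max^k}} \leq \exn{\max_i Y^k_i},
\]
where the last inequality holds because an average over $\Max^k$ is at most the overall maximum.

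The final step is to recognize $\exn{\max_i Y^k_i}$ as exactly the object controlled by the Aven bound \eqref{ME_bound}, applied not to the value estimators but to the independent, zero-mean variables $Y^k_i$ (whose maximum of means is $0$). This gives $\exn{\max_i Y^k_i} \leq \sqrt{\frac{M-1}{M}\sum_i \var{\argset^k_i}}$, hence $\mu_* - \ex{\est^k_*}{V} \leq \sqrt{\frac{M-1}{M}\sum_i \var{\argset^k_i}} < \sqrt{\sum_i \var{\argset^k_i}}$. Averaging over $k$ then establishes the conjecture, in fact with the strictly smaller constant $\sqrt{(M-1)/M}$.

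I expect the main obstacle to lie in the strictness rather than in the inequality itself. The chain above is strict whenever $\sum_i \var{\argset^k_i} > 0$ for at least one fold, because $\sqrt{(M-1)/M} < 1$; however, in the fully degenerate case where every argument estimator is deterministic we have $\Max^k = \Opt(V)$, the bias is exactly $0$, and the right-hand side is also $0$, so the strict inequality fails. The clean statement therefore needs at least one positive argument variance, which is the only case of interest. A secondary point to check carefully is that \eqref{ME_bound}, stated for the ME, transfers verbatim to the centered variables $Y^k_i$: this is immediate since they are independent with finite variances and all share mean zero, but it should be spelled out explicitly, together with the independence of $\argset^k$ and $\valset^k$ used in the first step.
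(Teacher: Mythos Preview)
Your argument is correct and in fact establishes the full conjecture for general $M$, whereas the paper only handles the case $M=2$. The two proofs take genuinely different routes. The paper, for $M=2$, writes the per-fold bias as $\prob{2\in\Max^k}(\mu_2-\mu_1)$, applies Cantelli's inequality to bound the probability that the non-optimal index is maximal, and then minimizes over the gap $\mu_2-\mu_1$; this yields the constant $\tfrac{1}{2}$, which is sharper than your $\sqrt{(M-1)/M}=1/\sqrt{2}$ in that special case. Your approach instead centers the argument estimators, observes the clean identity $\bar\mu_{\Max^k}=\max_j \argset^k_j-\bar Y^k_{\Max^k}$, discards the nonnegative ME-type term $\exn{\max_j \argset^k_j}-\mu_*$, and then bounds $\exn{\bar Y^k_{\Max^k}}\le\exn{\max_i Y^k_i}$ by Aven's inequality \eqref{ME_bound}. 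What this buys you is generality: the argument goes through for arbitrary $M$ without any case analysis, and it even delivers the sharper constant $\sqrt{(M-1)/M}<1$, so strictness is automatic whenever some $\var{\argset^k_i}>0$. Your remark about the degenerate zero-variance case is apt; the paper's $M=2$ proof has the same caveat (the final inequality there is also only strict when the argument variances are positive). The two points you flag as needing care---independence of $\argset^k$ and $\valset^k$, and applicability of \eqref{ME_bound} to the independent zero-mean variables $Y^k_i$---are both satisfied under the paper's standing assumptions.
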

We do not prove this conjecture here in full generality, but there is a proof for $M=2$ in the appendix.
It makes intuitive sense that the bias of $\cv$ depends only on the variances $\var{\argset^k_i}$ if each $\est^k_i$ is unbiased.
The bias of the CV estimators is unaffected by the fact that $\cv$ averages over $K$ estimators $\est^k_*$, but $K$ does affect the bias by regulating how many samples are used for each $\argset^k_i$. As mentioned earlier, for LVCV larger a $K$ implies a higher bias since then $\argset^k_i$ is \emph{more} variable, while for LBCV a larger $K$ implies a lower bias since then $\argset^k_i$ is \emph{less} variable.

Although CV is known for low
bias and high variance, the next theorem shows its absolute bias is
not necessarily smaller than the absolute bias of the ME.
\begin{theorem}
There exist $V$ and $N = |X|$ such that $|\biasc{\cv}{N}|>|\biasc{\msa}{N}|$ for any $K$ and for any variant of CV.
\end{theorem}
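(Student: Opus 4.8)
The plan is to exhibit a single pair $(V,N)$ for which every CV estimator---both variants, every fold count $K$---has larger absolute bias than the ME. I would take the smallest nontrivial case $M=2$ with normal variables $V_1\sim\mathcal N(\Delta,1)$ and $V_2\sim\mathcal N(0,1)$ for a separation $\Delta>0$ to be fixed last, give each variable $n$ direct samples (so $N=2n$), and let every $\est_i$ be the corresponding sample average. Then $\mu_*=\Delta$ and $\Opt(V)=\{1\}$, so selecting index $2$ is always an error.

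First I would derive both biases in closed form and read off the asymmetry that drives the result. Because $\argset^k$ and $\valset^k$ are built from disjoint, hence independent, samples and ties have probability zero, we have $\ex{\valset^k_i}{i\in\Max^k}=\mu_i$ and $\ex{\est^k_*}{V}=\mu_*-q_m\Delta$, where $q_m=\prob{\argset^k_2>\argset^k_1}=\Phi(-\Delta\sqrt{m/2})$ and $m$ is the number of argument-set samples per variable; for equal folds, averaging over the (identically distributed) $\est^k_*$ preserves this, so $|\biasc{\cv}{N}|=q_m\Delta$ exactly. For the ME, $\biasc{\msa}{N}=\exn{(\est_2-\est_1)^+}=\sqrt{2/n}\,\phi(a)-\Delta\,\Phi(-a)<\sqrt{2/n}\,\phi(a)$ with $a=\Delta\sqrt{n/2}$. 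The key point is the contrast: when the ME mis-ranks the two variables it overshoots by only the small amount $\est_2-\est_1$, whereas each CV mis-ranking costs the full gap $\Delta$ in expectation, since the value set is drawn afresh and is independent of the selection.

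Next I would handle the ``any $K$, any variant'' quantifier by monotonicity and then close the bound. Every fold of every variant forms $\argset^k$ from at most $n-1$ of the $n$ samples per variable (LBCV omits at least one sample; an LVCV fold holds at most $n-1$), and $q_m$ decreases in $m$; hence $|\biasc{\cv}{N}|\ge q_{n-1}\Delta$ uniformly, and it suffices to beat the ME in this single worst case, i.e. to show $\Delta\,\Phi(-b)\ge\sqrt{2/n}\,\phi(a)$ with $b=\Delta\sqrt{(n-1)/2}$. Using the Mills-ratio bound $\Phi(-b)>\frac{b}{b^2+1}\phi(b)$ and $\phi(b)/\phi(a)=e^{(a^2-b^2)/2}=e^{\Delta^2/4}$ (since $a^2-b^2=\Delta^2/2$), this reduces to $\Delta\,\frac{b}{b^2+1}\,e^{\Delta^2/4}\ge\sqrt{2/n}$, whose left side behaves like $\sqrt{2/(n-1)}\,e^{\Delta^2/4}$ and diverges as $\Delta\to\infty$. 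Fixing any $n\ge 2$ and taking $\Delta$ large therefore yields $|\biasc{\cv}{N}|\ge\sqrt{2/n}\,\phi(a)>\biasc{\msa}{N}$, the last strict inequality coming from the slack $\Delta\,\Phi(-a)>0$ dropped above.

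I expect the main obstacle to be making the uniform-in-$(K,\text{variant})$ step airtight rather than the worst-case analysis itself: one must verify that the argument-set sample count is the only fold- and variant-dependent quantity entering $\ex{\est^k_*}{V}$, so that the single monotone parameter $m$ really orders all the biases and checking leave-one-out LBCV certifies every remaining case, including unequal folds, where $|\biasc{\cv}{N}|$ is an average of $q_{m_k}\Delta$ over folds each with $m_k\le n-1$ and so is still bounded below by $q_{n-1}\Delta$. Since the statement is only an existence claim, I would resist pushing the construction beyond $M=2$.
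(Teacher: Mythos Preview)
Your proposal is correct and takes a genuinely different route from the paper. The paper does not give an analytic argument at all: it simply points to the experiments in Section~\ref{sec_examples}, noting that in two of the settings there even leave-one-out LBCV---the least-biased CV variant---has larger absolute bias than the ME, which establishes existence. You instead construct an explicit $M=2$ Gaussian example, obtain closed forms for both biases, collapse the ``any $K$, any variant'' quantifier to the single worst case $m=n-1$ via the monotonicity of $q_m=\Phi(-\Delta\sqrt{m/2})$ in the argument-set sample size $m$, and then close the comparison with a Mills-ratio bound for large $\Delta$. What each approach buys: the paper's empirical route exhibits the phenomenon in richer, more realistic settings (Bernoulli click-rates, polynomial regression) and shows it is not a contrived artifact; your analytic route is an honest mathematical proof that does not rest on Monte Carlo estimates, and it exposes the mechanism cleanly---a CV misranking costs the full gap $\Delta$ in expectation while an ME misranking only costs the small overshoot $(\est_2-\est_1)^+$, and the single extra sample the ME uses over leave-one-out LBCV for ranking can never overcome the resulting $e^{\Delta^2/4}$ factor. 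Your handling of the uniform quantifier, including unequal folds via $m_k\le n-1$, is sound.
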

Two different experiments in Section \ref{sec_examples} prove this theorem, since there even the negative bias of leave-one-out LBCV is larger in size than the positive bias of ME.

\begin{theorem}\label{CV_variance_bound}
The variance of $\lbcv$ is bounded by
\begin{equation*}
\var{ \lbcv } \leq \frac{1}{K^2} \sum_{k=1}^K \sum_{i=1}^M \var{\est^k_i}
\enspace.
\end{equation*}
\end{theorem}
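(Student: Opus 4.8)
The plan is to decompose $\lbcv$ over folds and exploit the one structural feature that distinguishes LBCV: the selection in fold $k$ is built from $X\setminus X^k$, whereas the value it reads off, $\valset^k_i=\est^k_i$, is built from the disjoint block $X^k$. Writing $w^k_i \equiv \indicator(i\in\Max^k)/|\Max^k|$, so that $w^k_i\in[0,1]$ and $\sum_i w^k_i=1$, we have $\est^k_*=\sum_i w^k_i\est^k_i$ and $\lbcv=\frac1K\sum_k\est^k_*$. For LBCV the whole weight vector $w^k$ is a function of the out-of-fold data only, hence independent of the family $\{\est^k_i\}_i$; moreover, since distinct variables are independent, the $\{\est^k_i\}_i$ are mutually independent within a fold.

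First I would bound each fold's contribution by conditioning on its own weights. By the law of total variance, $\var{\est^k_*}=\exn{\var{\est^k_* \mid w^k}}+\var{\exn{\est^k_* \mid w^k}}$. The independence just noted gives $\var{\est^k_*\mid w^k}=\sum_i (w^k_i)^2\var{\est^k_i}$ and $\exn{\est^k_*\mid w^k}=\sum_i w^k_i\mu_i$. Since $(w^k_i)^2\le 1$, the first term is at most $\sum_i\var{\est^k_i}$; this is the \emph{diagonal} piece that will supply the target bound. The second term, $\var{\sum_i w^k_i\mu_i}$, is a genuinely positive between-fold contribution that must be cancelled later.

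Next I would assemble the full variance as $\var{\lbcv}=\frac1{K^2}\left[\sum_k\var{\est^k_*}+\sum_{k\ne l}\cov{\est^k_*, \est^l_*}\right]$. Substituting the previous step, the theorem reduces to the single inequality $\sum_{k\ne l}\cov{\est^k_*, \est^l_*}+\sum_k\var{\sum_i w^k_i\mu_i}\le 0$, after which $\var{\lbcv}\le\frac1{K^2}\sum_k\sum_i\exn{(w^k_i)^2}\var{\est^k_i}\le\frac1{K^2}\sum_{k}\sum_i\var{\est^k_i}$ follows.

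The hard part is exactly this non-positivity, and it is where the folds' lack of independence bites. For sample averages one has the explicit relation $\argset^k_i=\frac{1}{K-1}\sum_{l\ne k}\est^l_i$, so the selection weights $w^k$ are a deterministic (leave-one-fold-out) function of the \emph{other} folds' value estimates; consequently $\est^k_*$ and $\est^l_*$ are correlated, and the between-fold variances and the cross-fold covariances are entangled. My approach would be to expand $\exn{\lbcv^2}$ and peel off each fold's value contribution using $w^k\perp\{\est^k_i\}_i$, which collapses every cross term into a product of means plus a residual; I would then show the residual equals $-\sum_k\var{\sum_i w^k_i\mu_i}$ up to a non-positive remainder, giving the required cancellation. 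A useful sanity check is that the resulting bound is $K$-independent (for equal folds it equals $\sum_i\var{\est_i(X)}$), and the appearance of $1/K^2$ rather than the naive $1/K$ is precisely the signature of this negative covariance; I would first verify the cancellation in closed form for $M=2$ (as the paper does for the companion bias conjecture) before treating general $M$.
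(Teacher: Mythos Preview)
Your decomposition is careful up to the point you call ``the hard part,'' but the reduction you make there leads to a claim that is simply false. After bounding the conditional-variance piece via $(w^k_i)^2\le 1$ to get $\exn{\var{\est^k_*\mid w^k}}\le\sum_i\var{\est^k_i}$, you assert that the theorem is equivalent to
\[
\sum_{k\ne l}\cov{\est^k_*,\est^l_*}+\sum_k\var{\sum_i w^k_i\mu_i}\le 0\enspace.
\]
Take $M=K=2$ with all four $\est^k_i$ i.i.d.\ standard normal, so $\mu_1=\mu_2=0$. The second sum vanishes, yet a direct computation gives $\cov{\est^1_*,\est^2_*}=1/\pi>0$: conditioning on the argmax indices $(J_1,J_2)$ and using fold independence, each of the four cases contributes $(\pm 1/\sqrt{\pi})(\pm 1/\sqrt{\pi})=1/\pi$. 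So the cross-fold covariances are \emph{positive}, not negative, and your reduced inequality fails even though the theorem itself holds here ($\var{\lbcv}=\tfrac12+\tfrac{1}{2\pi}<1$). The slack you need lives in the diagonal bound you already discarded: in this example $\exn{\sum_i(w^k_i)^2\var{\est^k_i}}=1$, not $2$, and that gap is exactly what absorbs the positive covariance. Your closing intuition is also off: the $1/K^2$ in the stated bound is not a signature of negative covariance; for balanced folds $\var{\est^k_i}$ itself scales like $K$, so $\tfrac{1}{K^2}\sum_{k,i}\var{\est^k_i}=\sum_i\var{\est_i}$ with no covariance argument required.

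The paper's route is quite different and avoids the fold-by-fold bookkeeping entirely. It never splits $\var{\lbcv}$ into within-fold variances, between-fold variances, and cross covariances. Instead it bounds $\var{\lbcv}$ in one stroke by the variance of the \emph{unselected} total $\tfrac{1}{K}\sum_k\sum_i\valset^k_i$, and then uses the mutual independence of all $\est^k_i$ (over both $k$ and $i$, since folds are disjoint and the $V_i$ are independent) to collapse that to $\tfrac{1}{K^2}\sum_{k,i}\var{\est^k_i}$.
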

If each $\est^k_i$ is unbiased, the variance of LVCV is necessarily smaller than that of LBCV and the same bound applies trivially to $\lvcv$.
\begin{corollary}\label{CV_variance_av}
If $\est_i$ is the sample average of $X_i$ and $|X^k_i| = |X_i|/K$ for all $k$, then
$
\var{ \cv } \leq \sum_{i=1}^M \var{\est_i} 
$ for both LBCV and LVCV.
\end{corollary}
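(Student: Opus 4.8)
The plan is to reduce the corollary to Theorem~\ref{CV_variance_bound} together with the remark immediately preceding the statement, leaving only an elementary variance substitution. The single piece of quantitative content is how the per-fold variance $\var{\est^k_i}$ relates to the full-sample variance $\var{\est_i}$, so I would pin this down first. Writing $\sigma_i^2$ for the variance of $V_i$: because $\est_i$ is the sample average of the $|X_i|$ samples in $X_i$, we have $\var{\est_i} = \sigma_i^2/|X_i|$, while $\est^k_i$ is the sample average of the $|X^k_i| = |X_i|/K$ samples in $X^k_i$, giving $\var{\est^k_i} = \sigma_i^2/|X^k_i| = K\sigma_i^2/|X_i| = K\,\var{\est_i}$. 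Note this ratio is the same for every fold $k$, since all folds share the same size.

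Next I would substitute this identity into the LBCV bound. Replacing each $\var{\est^k_i}$ by $K\,\var{\est_i}$ in Theorem~\ref{CV_variance_bound} yields
\[
\var{\lbcv} \;\leq\; \frac{1}{K^2} \sum_{k=1}^K \sum_{i=1}^M K\,\var{\est_i} \;=\; \frac{1}{K^2}\cdot K \cdot K \sum_{i=1}^M \var{\est_i} \;=\; \sum_{i=1}^M \var{\est_i},
\]
where one factor of $K$ comes from rewriting $\var{\est^k_i}$ and the second from the $k$-sum of a summand that is independent of $k$; both cancel against the $1/K^2$ prefactor. For LVCV, the remark preceding the corollary states that the bound of Theorem~\ref{CV_variance_bound} applies to $\lvcv$ as well (its variance being no larger than that of LBCV when each $\est^k_i$ is unbiased), so the identical substitution gives $\var{\lvcv} \leq \sum_{i=1}^M \var{\est_i}$.

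I do not anticipate a genuine obstacle: once the identity $\var{\est^k_i} = K\,\var{\est_i}$ is in hand, the rest is bookkeeping. The only points worth double-checking are that the two factors of $K$ exactly cancel the $1/K^2$ prefactor, and that the LVCV conclusion genuinely rests on the already-asserted variance ordering rather than requiring a fresh bound to be derived.
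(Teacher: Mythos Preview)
Your proposal is correct and matches the paper's own proof essentially line for line: both derive $\var{\est^k_i} = \sigma_i^2/|X^k_i| = K\sigma_i^2/|X_i| = K\,\var{\est_i}$ and substitute into Theorem~\ref{CV_variance_bound}, with the LVCV case inherited from the variance-ordering remark.
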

Conjecture \ref{CV_bound} and Theorem \ref{CV_variance_bound} imply that CV is consistent if each $\est_i$ is consistent and $K$ is fixed (or slowly increasing, see also \citet{Shao:1993}), and that
$
\mbox{MSE}(\cv) \leq 2 \sum_{i=1}^M \var{\est_i}
$.


\section{Concrete Illustrations}\label{sec_examples}
To illustrate that it is non-trivial to select an accurate
estimator, we discuss some concrete examples.

\subsection{Multi-Armed Bandits for Internet Ads}
The framework of multi-armed bandits can be used to optimize
which ad is shown on a website
\citep{Langford:2008,Strehl:2010}. Consider $M$ ads with unknown fixed expected returns per visitor $\mu_i$. Bandit algorithm can be used to balance exploration and exploitation to optimize the online return per visitor, which converges to $\mu_*$. However, quick accurate estimates of $\mu_*$ can be important, for instance to base future investments on. Additionally, placing any ad may induce some cost $c$, so we may want to know quickly whether $\mu_* > c$.

\begin{figure*}[t]
\begin{center}
\includegraphics[scale=0.52]{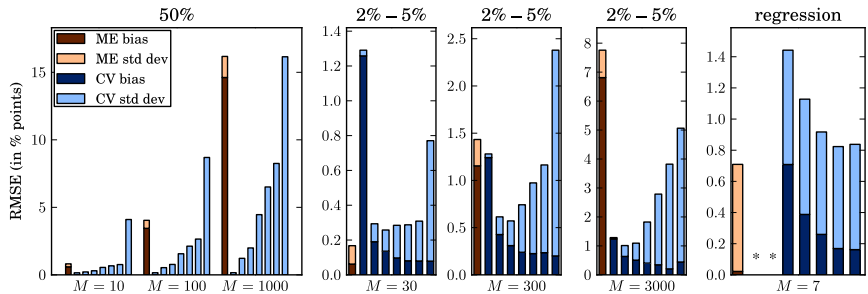}
\caption{\label{fig_ads} The MSE for $\msa$, $\lbcv$ and $\lvcv$ for different settings, averaged over 2,000 experiments. The left-most bar is always $\msa$. The other bars are, from left to right, leave-one-out LVCV, 10-folds LVCV, 5-folds LVCV, 2-folds CV, 5-folds LBCV, 10-folds LBCV and leave-one-out LBCV. Note that 2-folds LVCV is equivalent to 2-folds LBCV, which are therefore not shown separately.}
\end{center}
\end{figure*}

For simplicity, assume each ad has the same return per click, such that only the click rate matters and each $V_i$ can be modeled with a Bernoulli variable with mean $\mu_i$ and variance $(1-\mu_i)\mu_i$. In our first experiment, there are $N=100,000$ visitors, $M=10$, $M=100$ or $M=1000$ ads, and $\forall i: \mu_i = 0.5$. All ads are shown equally often, such that $\forall i: N_i = N/M$. Because all means are equal, Theorem \ref{CV_underest_theorem} implies that CV estimators are unbiased; their MSE depends solely on the variance. In the second---more realistic---setting, the $M$ mean click rates are distributed evenly between $0.02$ and $0.05$, there are $N=300,000$ visitors, and $M=30$, $M=300$, or $M=3000$ ads.

The results are shown in the first four plots in Figure \ref{fig_ads}. We show the root MSE (RMSE), such that the units are percentage points. Within the RMSEs, the contributions of the bias and the variance are shown. 
Note that $\mbox{MSE} = \mbox{bias}^2 + \mbox{variance}$, and therefore $\mbox{RMSE} = \sqrt{\mbox{bias}^2 + \mbox{variance}} \neq \mbox{bias} + \mbox{std dev}$.
This implies that the depicted contributions of bias and variance to the RMSE are not in general exactly equal to the bias and standard deviation, but this depiction does allow us to see directly how many percentage points of error are caused by bias and by variance.

In the first setting (left plot) CV is indeed unbiased. Leave-one-out LVCV has the lowest variance of all CV methods---it is barely visible---which implies it has the smallest MSE. For $M=1000$ ads, the huge bias of the ME causes it to overestimate the actual maximal click rate by more than 15\%.

In the second setting (middle three plots), there is a clear trade-off in CV: LVCV with large $K$ has large bias and small variance, whereas LBCV with large $K$ has small bias and large variance.\footnote{Sometimes the bias of LBCV seems to increase slightly for higher $K$. These are noise-related artifacts.} The bias of the CV estimators is clearly important, even though each $\est^k_i$ is unbiased. Even for leave-one-out LBCV the bias is non-negligible: for $M=30$ its bias is larger than the bias of the ME. Interestingly, when $M$ increases (and the number of samples per ad decreases correspondingly) the error for leave-one-out LVCV stays virtually unchanged, at approximately 1.3\%. Since the error of all other estimators increases with increasing $M$, this implies that leave-one-out LVCV goes from being by far the least accurate for $M=30$ to almost the most accurate for $M=3000$. In contrast, the ME goes from being the most accurate for $M=30$ to the least accurate for $M=3000$. The reason is that for increasing $M$, the variables are relatively more similar to iid variables, which is a best case for LVCV and a worst case for the ME. In all three cases, 10- and 5-folds LVCV are a good choice.

\subsection{Evaluation of Algorithms}
We now consider a regression problem. The goal is to fit polynomials on noisy samples from a function $r(y) = 4( \sin(y) + \sin(2y) )$. Let $X = \{ (y,r(y) + \omega) \,|\, y \in Y \}$ denote a noisy data set for inputs $Y$, where $\omega$ is zero-mean Gaussian noise with variance $\sigma^2_\omega = 4$.  
Let $p_i$ denote a polynomial of degree $i$, of which the coefficients are fitted with least-squares on $X$.

Let $Y = \{ 0, 0.05, \ldots, 3.95, 4 \}$ be 81 equidistant inputs.
We want to maximize the negative MSE. The lowest expected MSE of fitting each $p_i$ on 81 samples
and testing on an independent test set of 81 samples
is obtained at $4.34$ for $i=5$, which implies $\mu_* = \mu_5 = -4.34$.
We construct 1,000 independent noisy sets $X = \{ (y,r(y) + \omega) \,|\, y \in Y \}$. For each $X$, we conduct the following experiment.

For any given $Z \subseteq X$, $\est_i$ is defined by an inner CV loop as follows. For each $z \in Z$, we fit $p_i$ on $Z \setminus \{z\}$ and test the error on $z$ to obtain an error $e_i(z)$. We average these errors to obtain: $\est_i(Z) = \frac{1}{|Z|} \sum_{z \in Z} e_i(z)$.  This implies $\est_i$ is biased, since $p_i$ is fitted on $|Z|-1 < 81$ samples. 
For the ME, $\msa = \max_i \est_i(X)$ which means $|Z| = 80$ samples are used to fit each $p_i$. For LBCV, $\argset^k_i = \est_i(X \setminus X^k)$ which means $|Z|=\frac{K-1}{K}81$. For LVCV, $\argset^k_i = \est_i(X^k)$, which means $|Z| = \frac{1}{K}81$. Since $|Z|$ can then be much smaller than 81, LVCV can be significantly biased. We consider $K\in \{2,3,9,81\}$. When $K=81$, LBCV is also known as \emph{nested leave-one-out CV}. Figure \ref{fig_ads} (right plot) shows the results. 

LVCV is not shown for $K=81$ and $K=9$: LVCV with $K=81$ is meaningless, since one cannot fit a polynomial on a single point. The MSE for $K=9$ is huge. In sharp contrast with the previous settings, LVCV fares poorly---even in terms of variance---and leave-one-out LBCV is the best CV estimator. However,  interestingly the ME is more accurate than all CV estimators, and even the size of its bias ($0.018$) is much smaller than that of $n$-fold LBCV ($-0.190$).

\section{Discussion}\label{sec_disc}
Our results show that it is hard to choose an estimator that is good in general. Unfortunately, the best choice in one setting can be the worst choice in another. A poorly chosen CV estimator can be far less accurate than the ME.  This does not imply that we suggest using the ME; it is often very biased. 

A potential advantage of CV estimators in some settings is a
guaranteed non-positive bias. This can be
desirable even if the estimator is less accurate. However, in our results the recommendation to always use 10-folds LBCV \citep{Kohavi:1995} seems unfounded. When each $\est_i$ is unbiased and especially when $M$ is large, LVCV often performs much better. On the other hand, when each estimator $\est_i$ has a bias that decreases with the number of samples, the bias of LVCV can become prohibitively large, as illustrated in the regression setting. This explains why 10-folds LBCV is often not a bad choice for model selection, as long as $M$ is fairly small and $\est_i$ is fairly biased. However, note that 5- and 10-folds LBCV were the most accurate estimator in \emph{none} of our experiments.

As a general recommendation, it may be good to try both the ME and one or more CV estimators. If the estimates are close together, this indicates they are more likely to be accurate. Although the true maximum expected value will often lie between the estimate by the ME and those by CV, one should not simply average these estimates: as we have shown that for instance the ME can be very biased in some settings, and hardly biased in others. Furthermore, the potentially excessive variance of some variants of LBCV implies that in some cases its estimate may itself be an overestimation, which is why we recommend to include LVCV in the analysis.

\paragraph{Alternative estimators} Of course, there are possible alternatives to the estimators we discussed. First, one can consider using the maximum of some lower confidence bounds on the individual value estimates. Although this does counter the overestimation of ME, it can not be guaranteed that this does not lead to an underestimation in its place. Furthermore, it is non-trivial to select a good confidence interval, and the resulting estimate will typically be much more variable than the ME.

Second, for model-selection there exist criteria such as AIC \citep{Akaike:1974} and BIC \citep{Schwarz:1978} that use a penalty term based on the number of parameters in the model. Obviously, such penalties are only useful when comparing homomorphic models with different numbers of parameters, and therefore do not apply to the more general setting we consider in this paper. Furthermore, the main purpose of these criteria is not to give an accurate estimate of the expected value of the best model, but to increase the probability of selecting it. These goals are related, but unfortunately not equivalent.

Finally, one can estimate belief distributions $\estF_i$ for the location of each $\mu_i$, for instance with Bayesian inference. With these distributions, we can estimate $\mu_*$. This approach is less general, since it requires prior knowledge about $\mathcal{V}$, but then it does seem reasonable. The probability that the maximum mean is smaller than some $x$ is equal to the probability that all means are smaller than $x$. Therefore, its CDF is $\estF_{\max}(x) = \prod_{i=1}^M \estF_i(x)$, which we can use to estimate $\mu_*$. The resulting Bayesian estimator (BE) is
\[
\est_*^{\text{BE}} = \intg{ x \sum_{i=1}^M \estf_i(x) \prod_{j \neq i} \estF_j(x) }{ x }{ -\infty }{ \infty } \enspace,
\]
where $\estf_i(x) = \frac{d}{dx} \estF_i(x)$.
To show a perhaps counter-intuitive result from this approach, we discuss a small example. Consider two Bernoulli variables. We consider all means equally likely and use a uniform prior Beta distribution, with parameters $\alpha = \beta = 1$. Suppose $\mu_1 = \mu_2 = 0.5$. We draw two samples from each variable. The expected estimate for the ME is $\frac{21}{32}$, for a bias of $\frac{5}{32} \approx 0.156$. CV is unbiased, since the means are equal. For the BE $\estF_i(x)$ is $1 - (1-x)^3$, $3x^2 - 2x^3$ or $x^3$, depending on how many samples for $V_i$ are equal to one. Its expected value is then $\exn{ \est_*^{\text{BE}} } = \frac{737}{1120} \approx 0.658$. Note that the positive bias is even higher than the bias of the ME. This is due to our uniform prior: if the prior on the individual variables is uniform, this implies the prior for the maximum expected value is negatively skewed, and its expected value is increased. The effect is already apparent with two variables, but it increases further with the number of variables due to the shape of $\estF_{\max}(x) = \prod_{i=1}^M \estF_i(x)$.

\section{Conclusion}\label{sec_conc}
We analyzed the bias and variance of the two most common estimators for the maximum expected value of a set of random variables. The \emph{maximum estimate} results in non-negative bias. The common alternative of \emph{cross validation} (CV) has non-positive bias, which can be preferable. Unfortunately, the accuracies of different variants of CV are very dependent on the setting; an uninformed choice can result in extremely inaccurate estimates.
No general rule---e.g., always use 10-fold CV---is always optimal.

\subsection*{Appendix}
\begin{proof}[Proof of Theorem \ref{theorem_overest}]
For conciseness, we leave $V$ and $X$ implicit. Let $j \in \Opt$ be an arbitrary optimal index, and define event $A_j \equiv (j \in \Max)$ to be true if and only if $j$ is maximal. We can write
\[
\mu_* = P(A_j) \ex{ \est_j }{ A_j } + P(\neg A_j) \ex{ \est_j }{ \neg A_j } \enspace.
\]
Note: $\ex{ \est_j }{ A_j } = \ex{ \msa }{A_j}$ and $\ex{ \est_j }{ \neg A_j } < \ex{ \msa }{\neg A_j}$. Therefore,
$
\mu_* \leq \exn{ \msa }
$, 
with equality if and only if $\prob{ \neg A_j } = 0$ for all $j \in \Opt$.
\end{proof}

\begin{proof}[Proof of Theorem \ref{ME_variance_bound}]
Let $A$ and $B$ be independent sets of RVs with $\exn{ A_i } = \exn{ B_i }$ and $\exn{ A_i^2 } = \exn{ B_i^2 }$. Define
\[
C^{(i)} \equiv
(A \setminus A_i) \cup \{ B_i \} =
\{ A_1, \ldots, A_{i-1}, B_i, A_{i+1}, \ldots, A_M \}
\enspace.\]
The Efron-Stein inequality \citep{EfronStein:1981} states that for any  $g : \reals^M \to \reals$:
\begin{equation*}
\mbox{Var}\left( g( A  ) \right) \leq \frac{1}{2} \sum_{i=1}^M \exn{ \left( g( A ) - g( C^{(i)} ) \right)^2 } \enspace.
\end{equation*}
Let $A$ and $B$ be independent instantiations of $\est$ and let $g( A ) = \max_i A_i$ for any $A$. We derive
\begin{align*}
& \var{ \msa }
\leq \frac{1}{2} \sum_{i=1}^M \exn{ \left( \max_j A_j - \max_j C^{(i)}_j \right)^2 } \\
& \leq \frac{1}{2} \sum_{i=1}^M \exn{ \left( A_i - B_i \right)^2 } = \sum_{i=1}^M \var{\est_i} \enspace.\qedhere
\end{align*}
\end{proof}

\begin{proof}[Proof of Theorem \ref{CV_underest_theorem}]
Let $w^k_i \equiv \exn{\indicator(i \in \Max^k)/|\Max^k|\,}$. Then $\exn{\est^k_*} = \sum_i^M w^k_i \mu_i \leq \mu_*$, because $\Max^k$ and $\valset^k_i$ are independent, $\sum_i^M w^k_i = 1$ and $\exn{\valset^k_i} = \mu_i$.
Note that $w^k_i > 0$ if and only if $\prob{ i \in \Max^k}>0$. Therefore, $\exn{\est^k_*} < \mu_*$ if and only if there exists a $i \notin \Opt$ such that $\prob{ i \in \Max^k } > 0$.
\end{proof}

\begin{proof}[Proof of Conjecture \ref{CV_bound} for $M=2$]
Assume without loss of generality that $\mu_1 = \mu_*$. The assumption $\exn{\est^k_i} = \mu_i$ implies that $\exn{ \valset^k_i } = \mu_i$. Then,
\begin{align*}
\bias{\est^k_*}
& = \exn{ \frac{ \indicator( 2 \in \Max^k ) }{ |\Max^k| } } \left( \mu_2 - \mu_1 \right) \\
& \geq \prob{ 2 \in \Max^k } \left( \mu_2 - \mu_1 \right) \\
& = \prob{ \argset^k_2 \geq \argset^k_1 } \left( \mu_2 - \mu_1 \right) \\
& \geq \frac{ \left( \var{ \argset^k_1 } + \var{ \argset^k_2 } \right) \left( \mu_2 - \mu_1 \right)}{ \var{ \argset^k_1 } + \var{ \argset^k_2 } + ( \mu_1 - \mu_2 )^2 } \\
& \geq -\frac{1}{2} \sqrt{ \var{ \argset^k_1 } + \var{ \argset^k_2 } } \enspace,
\end{align*}
where the second inequality follows from Cantelli's inequality, and the third inequality is the result of minimizing for $\mu_2 - \mu_1$. From this, it follows that for $M=2$
\[
\bias{ \cv } \leq - \frac{1}{2K} \sum_{k=1}^K \sqrt{ \sum_{i=1}^2 \var{\argset^k_i} } \enspace,
\]
which is a factor $\frac{1}{2}$ tighter than the general bound in the conjecture.
\end{proof}

\begin{proof}[Proof of Theorem \ref{CV_variance_bound}]
We apply definition \eq{CV_est} and use $\sum_{k=1}^K \valset^k_i = \sum_{k=1}^K \est_i^k$ to derive
\begin{align*}
\var{ \!\frac{1}{K} \sum_{k=1}^K \frac{1}{|\Max^k|} \sum_{i\in \Max^k}^M \valset^k_i \!}
& \leq \var{\! \frac{1}{K} \sum_{k=1}^K \sum_{i=1}^M \valset^k_i \!} \\
& \leq 
\frac{1}{K^2} \sum_{k=1}^K \sum_{i=1}^M \var{\est^k_i} \enspace.\qedhere
\end{align*}
\end{proof}

\begin{proof}[Proof of Corollary \ref{CV_variance_av}]
Apply Theorem \ref{CV_variance_bound} with
$
\var{\est^k_i} = \sigma_i^2/|X^k_i| = K \sigma_i^2/|X_i| = K \var{\est_i}
$
\end{proof}

\bibliography{../all.bib}
\bibliographystyle{abbrvnat}

\end{document}